\newtheorem{remark}{Remark}[section]
\title{An efficient iterative thresholding method for image segmentation
\thanks{We thank Prof. Tony Chan, Zuowei Shen, Xuecheng Tai and Xiaoqun Zhang for helpful discussions and suggestions.  This research was supported in part by the Hong Kong Research Grants Council
  (GRF grants  605513 and 16302715, CRF grant C6004-14G, and
  NSFC-RGC joint research grant N-HKUST620/15).}}
{
\author{
Dong Wang
\thanks{Department of Mathematics, Hong Kong University of Science and Technology,
Clear Water Bay, Kowloon, Hong Kong, China.  (dwangaf@connect.ust.hk).
 }
\and Haohan Li
\thanks{Department of Mathematics,  Hong Kong University of Science and Technology,
Clear Water Bay, Kowloon, Hong Kong, China.  (hlibb@connect.ust.hk).
}
\and Xiaoyu Wei
\thanks{Department of Mathematics,  Hong Kong University of Science and Technology,
Clear Water Bay, Kowloon, Hong Kong, China.  (xweiaf@connect.ust.hk).
}
\and Xiaoping Wang
\thanks{Corresponding author. Department of Mathematics,  Hong Kong University of Science and Technology,
Clear Water Bay, Kowloon, Hong Kong, China.  (mawang@ust.hk).
 }
}

\begin{document}

\maketitle

\renewcommand{\thefootnote}{\arabic{footnote}}
\begin{abstract}
We proposed an efficient iterative thresholding method for multi-phase image segmentation. The algorithm is based on minimizing  piecewise constant Mumford-Shah functional in which the contour length (or perimeter) is approximated by a non-local multi-phase energy.  The minimization problem is solved by an iterative method. Each iteration consists of  computing simple convolutions followed by a thresholding step. The algorithm is easy to implement and has the optimal complexity $O(N \log N)$ per iteration. We also show that the iterative algorithm has the total energy decaying property.  We present some numerical results to show the efficiency of our method.
\end{abstract}

\begin{keywords}
 Iterative thresholding, Image segmentation, Piecewise constant Mumford-Shah functional, Convolution, Fast Fourier transform
\end{keywords}

\begin{AMS}
35K08; 42A85; 65T50; 68U10
\end{AMS}

\pagestyle{myheadings}
\thispagestyle{plain}
\markboth{ D. Wang, H. Li, X. Wei, X. Wang}{An efficient iterative thresholding method for image segmentation}

\section{Introduction}
Image segmentation is one of the fundamental tasks in image processing.
In broad terms, image segmentation is the process of partitioning a digital image
into many segments according to a characterization of the image.
The motivation behind this is to determine which part of an image is meaningful for analysis.  It is one of the fundamental problems in computer
vision. Many practical applications require image segmentation, like content-based
image retrieval, machine vision, medical imaging, object detection and traffic control systems \cite{mitiche2010variational}.

The variational method enjoyed tremendous  success in  image segmentation.
In this method, a particular energy is chosen and  minimized  to give
a segmentation of an image.  The Mumford-Shah model \cite{mumford1989optimal} is  the  most successful model and
 has been studied extensively in the last 20 years.
More precisely,
the Mumford-Shah model was formulated as follows:
\begin{align}
E_{MS}(u,\Gamma) = \int_{D\setminus\Gamma}|\nabla u|^2dx + \mu Length(\Gamma) + \lambda\int_D{(u-f)}^2dx \label{MS}
\end{align}
Here, $\mu$ and $\lambda$ are positive parameters.  $\Gamma$ is  a closed subset of $D$ given by the union of a finite number of curves. It represents the set of edges (i.e. boundaries of homogeneous regions) in the image $f$. The function $u$ is the piecewise smooth approximation to $f$.
%
Due to the non-convexity of~(\ref{MS}), the minimization problem is difficult to solve numerically \cite{ambrosio1990approximation}.

A  useful simplification of (\ref{MS}) is to restrict the minimization to functions (i.e. segmentations) that take a finite number of values. The resulting model is commonly referred to as the piecewise constant Mumford-Shah model.  In particular,  we have the following two-phase Chan-Vese model~\cite{chan2001active, vese2002multiphase}:
\begin{align}
E_{CV}(\Sigma, C_1, C_2) =\lambda  Per(\Sigma;D)+\int_{\Sigma}{(C_1-f)}^2dx+\int_{D\setminus\Sigma}{(C_2-f)}^2dx\label{CV}
\end{align}
where $\Sigma$ is the interior of a closed curve and $Per(.)$ denotes the perimeter.  $C_1$ and $C_2$ are averages of $f$ within $\Sigma$ and $D \setminus \Sigma$ respectively.
The level set method was used to solve the minimization problem for  the piecewise constant Mumford-Shah functional (\ref{CV}).  Let $\phi(x): D\rightarrow R$ be  a Lipschitz continuous function  with $\Sigma=\{x\in D: \phi(x)>0\}$ and $D\setminus\Sigma=\{x\in D: \phi(x)<0\}$. We can rewrite (\ref{CV})  as
\begin{align}
E_{CV}(\phi, C_1, C_2) =  \int_D\{\lambda |\nabla H(\phi)|+H(\phi){(C_1-f)}^2+(1-H(\phi)){(C_2-f)}^2\}dx \label{CV2}
\end{align}
where $H(\cdot): R\rightarrow R$ is the Heaviside function
\[
H(\xi)=
\begin{cases}
0 &\text{if } \xi<0,\\
1 &\text{if } \xi \geq0.
\end{cases}
\]
In practice, a regularized version of $H$ denoted by $H_\varepsilon$ is used.
Then the Euler-Lagrange equation of (\ref{CV2}) with respect to $\phi$ is given by
\begin{align}
\frac{\partial\phi}{\partial t} =
-H'_\varepsilon(\phi)\{-\{{(C_1-f)}^2-{(C_2-f)}^2\}
 +\lambda \nabla\cdot(\frac{\nabla\phi}{|\nabla\phi|})\}\label{LCV}
\end{align}
where
\begin{align*}
C_1 = \frac{ \int_D{H(\phi)}fdx }{ \int_D{H(\phi)dx} }
\quad \text{and} \quad
C_2 = \frac{ \int_D{(1-H(\phi))}fdx} { \int_D{(1-H(\phi))dx} }
\end{align*}
Equation (\ref{LCV}) is nonlinear and requires regularization when  $|\nabla \phi|=0$.  Various modifications are used in order to solve the equation more efficiently \cite{ambrosio1990approximation,braides1998approximation,tsai2001curve,vese2002multiphase}.

Esedoglu et al. ~\cite{esedog2006threshold}  proposed a phase-field approximation of (\ref{CV}) in which the Ginzburg-Landau functional is used to approximate the perimeter:
\begin{align}
&E_{MS}^\varepsilon(u,C_1, C_2)  \nonumber\\
=& \int_{D}\left\{ \lambda\left( \varepsilon|\triangledown u|^2 +\frac{1}{\varepsilon} W(u)\right)+u^2(C_1-f)^2+(1-u)^2(C_2-f)^2\right\} dx  \label{ms}
\end{align}
where $\varepsilon>0$ is the approximate interface thickness and $W(\cdot)$ is a double-well potential.
Variation of  (\ref{ms}) with respect to $u$ yields the following gradient descent equation:
\[
u_t=\lambda\left(2\epsilon \Delta u -{1\over\epsilon} W'(u)\right)-2 \{ u (C_1-f)^2+(u-1)(C_2-f)^2\}
\]
which can be solved efficiently by an MBO based threshold dynamic method that works by alternating the solution of a linear (but non-constant coefficient) diffusion equation   with thresholding.


In a series of papers \cite{dong2010frame,dong2012wavelet,dong2013x,shen2011accelerated}, a frame-based model was introduced  in which the perimeter  term was approximated  via framelets.  The method was used to  capture  key features of biological structures. The model can also be fast implemented using split Bregman method \cite{goldstein2009split}.

In \cite{cai2013two}, a two-stage segmentation method is proposed. In the first stage, the authors apply the split Bregman method\cite{goldstein2009split} to find the minimizer of a convex variant of the Mumford-Shah functional.  In the second stage, a K-means clustering algorithm is used to choose $k-1$ thresholds automatically to segment the image into $k$ segments. One of the advantages of this method is that there is no need to specify the number of segments before finding the minimizer.  Any $k$-phase segmentation can be obtained by choosing $k-1$ thresholds after  the minimizer is found.

Chan et al. \cite{chan_algorithms_2006} considered a convex reformulation to part of the Chan-Vese model. Given fixed values of $C_1$ and $C_2$, a global minimizer can be found. It is then demonstrated in \cite{yuan_study_2010} that this convex variant can be regarded as a continuous min-cut (primal) problem, and a corresponding continuous max-flow problem can be formulated as its dual. Efficient algorithms are developed by taking advantage of the strong duality between the primal and the dual problem, using the augmented Lagrangian method  or the primal-dual method (see \cite{wei2015primal, yuan_study_2010} and references therein).

The idea of approximating the perimeter of a set by a non-local energy (using heat kernel) \cite{alberti1998non}\cite{miranda2007short} is used by Esedoglu and Otto \cite{esedog2015threshold} to design an efficient  threshold dynamics method for  multi-phase problems with arbitrary surface tensions. The method is also generalized to wetting on rough surfaces in \cite{xu_wang_wang_2016}.
In this paper, we propose an efficient iterative thresholding method for minimizing  the piecewise constant Mumford-Shah functional based on the similar approach.
The perimeter term in (\ref{CV}) is  approximated by a non-local multi-phase energy constructed based on convolution of the heat kernel with the characteristic functions of regions. An iterative algorithm is then derived to minimize the  approximate energy. The procedure works by alternating the convolution step with the thresholding step. The convolution can be implemented efficiently on a uniform mesh using the fast Fourier transform (FFT) with the optimal complexity of $O(N \log N)$ per iteration.  We also show that the algorithm is convergent and has the total energy decaying property.

The rest  of the paper proceeds as follows. In Section 2, we first give the approximate  piecewise constant Mumford-Shah functional.  We then derive the iterative thresholding scheme based on the linearization of the approximate functional. The monotone decrease of the iteration and therefore the convergence of the method is proved (with details given in the appendix).  In Section 3, we present some numerical examples to show the efficiency of the method.

\section{An efficient iterative thresholding method for image segmentation}

In this section, we introduce an iterative thresholding method for image
segmentation based on the Chan-Vese model~\cite{chan2001active}. The perimeter terms in (\ref{CV}) will be approximated by a non-local multi-phase energy constructed based on convolution of the heat kernel with the characteristic functions of regions. The iterative algorithm is then derived as an optimization procedure for the approximate energy. We will also analyse the convergence of the iterative thresholding method.
%
%

\subsection{The approximate Chan-Vese functional} \label{Sec:RepEnergy}
 Let $\Omega$ denote the domain of an input image $f$ given by a $d$-dimensional vector. Our task is to find an $n$-phase partition ${\{ \Omega_i\}}_{i=1}^{n}$ of $\Omega$ which minimizes (\ref{CV}) where $ \Omega_i$ represents the region of  the $i^{th}$ phase.
Let $u=(u_1(x),\cdots,u_n(x))$ where $\{u_i(x)\}_{i=1}^{n}$ are the  characteristic functions of the regions $\{\Omega_i\}_{i=1}^{n}$. We then look for $u$ such that
\begin{align}
u=&\mathop{\mathrm{argmin}}\limits_{u\in \mathcal{S}}\sum\limits_{i=1}^{n}\left[\int_{\Omega}u_i(x)g_i(x) d\Omega+\lambda| \partial \Omega_i |\right],  \label{isey}
\end{align}
where $\mathcal{S} =\left\lbrace u=(u_1,\cdots,u_n)\in BV(\Omega): u_i(x)=0,1, \text{and} \sum\limits_{i=1}^{n} u_i=1 \right\rbrace$;
 $|\partial \Omega_i |$ is the length of a boundary curve of the region  $\Omega_i $;
$g_i=
||C_i-f||_2^2$  ($||.||_2$ denotes the $l^2$ vector norm) and
\begin{align}
C_i=\frac{\int_{\Omega} u_ifd\Omega}{\int_{\Omega} u_id\Omega}. \label{Cdef}
\end{align}

It is shown in  \cite{alberti1998non}\cite{miranda2007short}, that when $\delta t \ll 1$, the length of $\partial \Omega_i \cap \partial \Omega_j$ can be approximated by
\begin{align}
|\partial \Omega_i \cap \partial \Omega_j|\approx \sqrt{\frac{\pi}{\delta t}}\int_{\Omega} u_i G_{\delta t} *u_j d\Omega,
\end{align}
where $*$ represents convolution and
\begin{align*}
G_{\delta t}(x)=\frac{1}{4\pi\delta t}exp(-\frac{|x|^2}{4\delta t})
\end{align*}
is the heat kernel.
Therefore,
\begin{align}
|\partial \Omega_i | \approx \sum\limits_{ j=1,j\neq i}^{n} \sqrt{\frac{\pi}{\delta t}} \int_{\Omega} u_i G_{\delta t} *u_jd\Omega.
\end{align}
Hence the total energy can be approximated by
\begin{align}
\mathcal{E}^{\delta t}(u_1,\cdots, u_n) =\sum\limits_{i=1}^{n}\int_{\Omega}\left(u_ig_i+ \lambda\sum\limits_{ j=1,j\neq i}^{n} \frac{\sqrt{\pi}}{\sqrt{\delta t}} u_i G_{\delta t} *u_j\right)d\Omega.   \label{energy1}
\end{align}
Now, (\ref{isey}) becomes
\begin{align}
u=&\mathop{\mathrm{argmin}}\limits_{(u_1,\cdots,u_n)\in \mathcal{S}} \mathcal{E}^{\delta t}(u_1,\cdots, u_n)  \label{Eq:MinApproEnergy}
\end{align}
This is a non-convex minimization problem since $\mathcal{S}$ is not a convex set.  However, we can relax this non-convex problem to a convex  problem  by finding $u=(u_1,\cdots,u_n)$ such that
\begin{align}
u=&\mathop{\mathrm{argmin}}\limits_{(u_1,\cdots,u_n)\in \mathcal{K}} \mathcal{E}^{\delta t}(u_1,\cdots, u_n). \label{Eq:RelaxMinApproEnergy}
\end{align}
where $\mathcal{K}$ is the convex hull of $\mathcal{S}$:
\begin{align}
\mathcal{K}=\left\lbrace u=(u_1,\cdots,u_n)\in BV(\Omega): 0\leq u_i(x)\leq1, \text{and} \sum\limits_{i=1}^{n} u_i=1 \right\rbrace.
\end{align}

\begin{remark}
It is easy to see  that the relaxed minimization problem (\ref{Eq:RelaxMinApproEnergy}) is convex if $C_i (i=1,..n) $ are  constants.
\end{remark}

\bigskip

The  following lemma shows that the relaxed problem (\ref{Eq:RelaxMinApproEnergy}) is equivalent to the original problem (\ref{Eq:MinApproEnergy}). Therefore we can solve the relaxed problem (\ref{Eq:RelaxMinApproEnergy}) instead.
\begin{lemma} \label{Lemma}
Let $\mathcal{L}$ be any linear functional defined on $\mathcal{K}$ and $u=(u_1,\cdots, u_n)$. Then
\begin{align}
\mathop{\mathrm{argmin}}\limits_{u\in \mathcal{S}} (\mathcal{E}^{\delta t}(u)+\mathcal{L}(u))=\mathop{\mathrm{argmin}}\limits_{u\in \mathcal{K}} (\mathcal{E}^{\delta t}(u)+\mathcal{L}(u)).
\end{align}
\end{lemma}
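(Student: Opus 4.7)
The strategy is to show that $\mathcal{E}^{\delta t} + \mathcal{L}$ is a concave functional on the convex set $\mathcal{K}$, because a concave functional on a convex set attains its minimum at an extreme point, and the extreme points of $\mathcal{K}$ are precisely the elements of $\mathcal{S}$ (this is essentially a Birkhoff-type characterization: a partition of unity with values in $[0,1]$ is an extreme point of $\mathcal{K}$ iff each $u_i$ takes only the values $0$ and $1$).

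The first step is to simplify the convolution sum using the constraint $\sum_{i=1}^n u_i=1$. Since
\begin{align*}
\sum_{i=1}^n\sum_{j\neq i}\int_{\Omega} u_i\,G_{\delta t}*u_j\,d\Omega
= \sum_{i=1}^n\int_{\Omega}u_i\,G_{\delta t}*\Bigl(\sum_{j\neq i}u_j\Bigr)d\Omega
= \sum_{i=1}^n\int_{\Omega}u_i\,G_{\delta t}*(1-u_i)\,d\Omega,
\end{align*}
and $\sum_i u_i\,G_{\delta t}*1$ depends only on $G_{\delta t}*1$, which is a fixed function independent of the particular $u$ (using $\sum_i u_i=1$ again), one can rewrite
\begin{align*}
\mathcal{E}^{\delta t}(u)=\text{(linear in }u\text{)}+\text{const}-\lambda\sqrt{\tfrac{\pi}{\delta t}}\sum_{i=1}^n\int_{\Omega}u_i\,G_{\delta t}*u_i\,d\Omega.
\end{align*}

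The second step is the concavity argument. The Gaussian kernel $G_{\delta t}$ has a positive Fourier transform, so each quadratic form $u_i\mapsto\int u_i\,G_{\delta t}*u_i\,d\Omega$ is positive (semi)definite. Subtracting this from linear terms and adding the linear functional $\mathcal{L}$ therefore yields a concave functional $F:=\mathcal{E}^{\delta t}+\mathcal{L}$ on $\mathcal{K}$.

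The third step is to invoke the classical fact that a concave functional on a convex set attains its infimum at extreme points, so $\min_{u\in\mathcal{K}}F(u)=\min_{u\in\mathrm{ext}(\mathcal{K})}F(u)$. One then identifies $\mathrm{ext}(\mathcal{K})=\mathcal{S}$: if some $u_i$ took a value strictly between $0$ and $1$ on a set of positive measure, then since $\sum_j u_j=1$ another $u_k$ would also take a value in $(0,1)$ on part of that set, and one could construct two distinct elements of $\mathcal{K}$ whose average equals $u$ by perturbing $u_i$ and $u_k$ in opposite directions on a small subset, contradicting extremality. Conversely every $u\in\mathcal{S}$ is trivially extreme in $\mathcal{K}$. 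Combined with the fact that $\mathcal{S}\subset\mathcal{K}$, this gives the claimed equality of argmin sets.

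The main obstacle is really just the concavity/extreme-point identification; the algebraic manipulation using $\sum_i u_i=1$ to isolate the negative quadratic part is routine, and the positivity of the Gaussian kernel's Fourier transform is standard. Some care is needed at the boundary of $\Omega$ (depending on how the convolution is defined there), but since the scheme is ultimately implemented with periodic boundary conditions via FFT, $G_{\delta t}*1$ is truly constant and no boundary correction is required.
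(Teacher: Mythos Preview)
Your approach is genuinely different from the paper's and in several ways more transparent, but it has one gap that needs to be addressed.

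The paper does not argue global concavity. It argues by contradiction: assuming a minimizer $v\in\mathcal{K}\setminus\mathcal{S}$, it chooses a set $A$ of positive measure and indices $k\neq l$ with $v_k,v_l\in(\epsilon,1-\epsilon)$ on $A$, forms the one-parameter family $u^t_m=v_m+t(\delta_{ml}-\delta_{mk})\chi_A$, and computes $\left.\frac{d^2}{dt^2}\mathcal{E}^{\delta t}(u^t)\right|_{t=0}<0$ by a lengthy explicit calculation, contradicting minimality at $t=0$. Your concavity/extreme-point route bypasses that calculation entirely, and the identification $\mathrm{ext}(\mathcal{K})=\mathcal{S}$ is clean.

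The gap is in your first step. You write the data term $\sum_i\int_\Omega u_i g_i\,d\Omega$ as ``linear in $u$'', but in the definition of $\mathcal{E}^{\delta t}$ the functions $g_i=\|C_i-f\|_2^2$ depend on $u$ through $C_i=C_i(u)=\int_\Omega u_i f\,d\Omega\big/\int_\Omega u_i\,d\Omega$. This is precisely why the bulk of the paper's proof is spent computing $\frac{dC_l}{dt},\frac{dC_k}{dt},\frac{d^2C_l}{dt^2},\frac{d^2C_k}{dt^2}$ and verifying that the resulting contributions to the second variation combine to a nonpositive quantity. As written, your argument only covers the case of fixed $C_i$.

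The fix is short and stays within your framework: since $C_i(u)$ is the minimizer of $c\mapsto\int_\Omega u_i\|c-f\|_2^2\,d\Omega$, one has
\[
\int_\Omega u_i\,g_i\,d\Omega=\min_{c\in\mathbb{R}^d}\int_\Omega u_i\|c-f\|_2^2\,d\Omega,
\]
an infimum of affine functionals of $u_i$, hence concave in $u_i$. Combined with your observation that the heat-kernel part equals a constant minus the positive-definite quadratic form $\lambda\sqrt{\pi/\delta t}\sum_i\int_\Omega u_i\,G_{\delta t}*u_i\,d\Omega$, the full $\mathcal{E}^{\delta t}+\mathcal{L}$ is (strictly) concave on $\mathcal{K}$ and your extreme-point argument goes through. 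With this one-line addition your proof is both correct and considerably shorter than the paper's.
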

\begin{proof} See Appendix A.
\end{proof}

\subsection{Derivation of the iterative thresholding method} \label{Sec:Derivation}
In the following, we show that the minimization  problem (\ref{Eq:MinApproEnergy}) can be solved  by an iterative thresholding method.
Suppose that we have  the $k^{th}$ iteration $(u_1^k, \cdots, u_n^k)\subset \mathcal{S}. $ Let  $g_i^k=||C_i^k-f||_2^2$ with
$$C_i^k=\frac{\int_{\Omega} u_i^kfd\Omega}{\int_{\Omega} u_i^kd\Omega}.$$
Then the energy functional $\mathcal{E}^{\delta t}(u_1,\cdots, u_n)$ with $g_i=g_i^k$ given above can be linearized near the point $(u_1^k,\cdots,u_n^k)$ by
\begin{align}
&\mathcal{E}^{\delta t}(u_1,\cdots, u_n) \approx  \mathcal{E}^{\delta t}(u_1^k,\cdots, u_n^k) \nonumber \\
&+\mathcal{L}(u_1-u_1^k,\cdots,u_n-u_n^k,u_1^k,\cdots,u_n^k)+h.o.t
\end{align}
where
\begin{align} \label{linear1}
\mathcal{L}(u_1,\cdots,u_n,u_1^k,\cdots,u_n^k)=\sum\limits_{i=1}^{n}\int_{\Omega}\left(u_ig_i^k+ \sum\limits_{ j=1,j\neq i}^{n} \frac{2\lambda\sqrt{\pi}}{\sqrt{\delta t}} u_i G_{\delta t} *u_j^k\right)d\Omega \nonumber\\
=\sum\limits_{i=1}^{n}\int_{\Omega}u_i \left(g_i^k+\sum\limits_{ j=1,j\neq i}^{n} \frac{2\lambda\sqrt{\pi}}{\sqrt{\delta t}} G_{\delta t} *u_j^k \right)d\Omega.
\end{align}
We can now  determine the next iteration $(u_1^{k+1}, \cdots, u_n^{k+1})$ by minimizing the linearized functional
\begin{align}
\min\limits_{(u_1,\cdots,u_n)\in \mathcal{K}}\mathcal{L}(u_1,\cdots,u_n,u_1^k,\cdots,u_n^k). \label{Eq:MinLinearEnergy}
\end{align}
Denote
\begin{align}
\phi_i^k:&=g_i^k+ \sum\limits_{ j=1,j\neq i}^{n} \frac{2\lambda\sqrt{\pi}}{\sqrt{\delta t}} G_{\delta t} *u_j^k.\\
&=g_i^k+\frac{2\lambda\sqrt{\pi}}{\sqrt{\delta t}} (1-G_{\delta t} *u_i^k).
\end{align}
We have 
\begin{align} \label{linear3}
\mathcal{L}(u_1,\cdots,u_n,u_1^k,\cdots,u_n^k)=\sum\limits_{i=1}^{n}\int_{\Omega} u_i\phi_i^k d\Omega. 
\end{align}
 The optimization problem (\ref{Eq:MinLinearEnergy}) becomes minimizing a linear functional over a convex set. It   can be carried out at each $x\in \Omega$ independently.  By  comparing the coefficients $\phi_i^k(x)$ (non-negative) of $u_i(x)$ in the integrand of (\ref{linear3}), it is easy to see that the minimum is attained at
\begin{align}
u_i^{k+1}(x)=\left\lbrace\begin{array}{cc}
1 & \text{if} \, \phi_i^k(x)=\min\limits_{l} \phi_l^k(x),\\
0 & \text{otherwise}.
\end{array}\right.
\end{align}
The following theorem shows  that the total energy $\mathcal{E}^{\delta t}$  decreases in the iteration for any $\delta t>0$.   Therefore, our iteration algorithm always converges to a minimum for any initial partition.
 \begin{theorem}\label{Theorem}
Let $(u_1^{k+1},\cdots,u_n^{k+1})$ be the $k+1^{th}$ iteration derived  above,   we  have
\begin{align}
\mathcal{E}^{\delta t}(u_1^{k+1},\cdots,u_n^{k+1}) \leq \mathcal{E}^{\delta t}(u_1^{k},\cdots,u_n^{k})  \label{Eq:StabilityInProof}
\end{align}
for all $\delta t>0.$
\end{theorem}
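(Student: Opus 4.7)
The plan is to decouple the fidelity constants $C_i$ from the characteristic functions by introducing the enlarged functional
\[
E(u,C) \;=\; \sum_{i=1}^n \int_\Omega u_i\,\|C_i-f\|_2^2\,d\Omega \;+\; \lambda\sum_{i=1}^n\sum_{j\neq i}\frac{\sqrt{\pi}}{\sqrt{\delta t}}\int_\Omega u_i\, G_{\delta t}*u_j\,d\Omega,
\]
so that $\mathcal{E}^{\delta t}(u) = E(u,C(u))$, where $C(u)=(C_1(u),\dots,C_n(u))$ is defined by (\ref{Cdef}). For fixed $u$, a direct computation shows that $C \mapsto E(u,C)$ is minimized exactly at $C=C(u)$, so $\mathcal{E}^{\delta t}(u)=\min_C E(u,C)$. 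This splits the argument into two pieces: (i) with the constants frozen at $C^k := C(u^k)$, the update $u^{k+1}$ decreases $E(\cdot,C^k)$; (ii) the subsequent replacement $C^k\mapsto C^{k+1}$ can only further decrease $E(u^{k+1},\cdot)$.

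The heart of (i) is concavity of $u \mapsto E(u,C^k)$ on $\mathcal{K}$. The fidelity part is linear in $u$. For the non-local perimeter part, using $\sum_{j\neq i} u_j=1-u_i$ on $\mathcal{K}$ together with $G_{\delta t}*1\equiv 1$ (the same identity the authors use to rewrite $\phi_i^k$), one obtains
\[
\sum_{i=1}^n\sum_{j\neq i}\int_\Omega u_i\,G_{\delta t}*u_j\,d\Omega \;=\; |\Omega| \;-\; \sum_{i=1}^n\int_\Omega u_i\,G_{\delta t}*u_i\,d\Omega.
\]
The Fourier symbol of $G_{\delta t}$ is the positive Gaussian $\exp(-\delta t\,|\xi|^2)$, so by Plancherel each map $u_i\mapsto\int u_i\,G_{\delta t}*u_i$ is a non-negative convex quadratic form. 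The displayed right-hand side is therefore a constant minus a convex functional of $u$, and $E(\cdot,C^k)$ is concave on $\mathcal{K}$.

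By the tangent-plane inequality for concave functions,
\[
E(u^{k+1},C^k) \;\leq\; E(u^k,C^k) + \bigl[\mathcal{L}(u^{k+1},u^k) - \mathcal{L}(u^k,u^k)\bigr],
\]
after one checks by a direct first-variation calculation that $\mathcal{L}(\,\cdot\,,u^k)$ is the linear part of $E(\cdot,C^k)$ at $u^k$; the symmetry of $G_{\delta t}$ is what produces the factor $2$ visible in (\ref{linear1}). Since $u^{k+1}$ is constructed as a minimizer of $\mathcal{L}(\cdot,u^k)$ over $\mathcal{K}$ by the pointwise thresholding rule, and $u^k \in \mathcal{S}\subset\mathcal{K}$, one has $\mathcal{L}(u^{k+1},u^k)\leq\mathcal{L}(u^k,u^k)$, so $E(u^{k+1},C^k)\leq E(u^k,C^k)=\mathcal{E}^{\delta t}(u^k)$. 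Chaining with (ii), i.e.\ $\mathcal{E}^{\delta t}(u^{k+1}) = E(u^{k+1},C^{k+1}) \leq E(u^{k+1},C^k)$, yields (\ref{Eq:StabilityInProof}).

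The main obstacle is step (ii)'s ingredient, the concavity in (i): it depends on non-negativity of $\widehat{G_{\delta t}}$ and on a boundary convention under which $G_{\delta t}*1 \equiv 1$ (periodic or free space), the same convention already implicit in the authors' reduction $\phi_i^k = g_i^k + \tfrac{2\lambda\sqrt{\pi}}{\sqrt{\delta t}}(1-G_{\delta t}*u_i^k)$. Once these are in hand, the remainder is the standard MBO-style monotonicity argument; Lemma~\ref{Lemma} plays only the supporting role of tying the relaxed minimization over $\mathcal{K}$ to the binary problem over $\mathcal{S}$, while the inclusion $u^{k+1}\in\mathcal{S}$ is automatic from the thresholding rule.
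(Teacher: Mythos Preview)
Your proof is correct and rests on the same two facts the paper uses: positive semidefiniteness of the Gaussian convolution quadratic form, and the fact that $C_i(u)$ is the minimizer of $C_i\mapsto\int_\Omega u_i\|C_i-f\|_2^2\,d\Omega$. The organization, however, differs. The paper expands $\mathcal{L}(u^k,u^k)\geq\mathcal{L}(u^{k+1},u^k)$ directly to obtain $\mathcal{E}^{\delta t}(u^k)\geq\mathcal{E}^{\delta t}(u^{k+1})+I_1+I_2$, and then verifies by explicit computation that $I_1=\sum_i\bigl(\int_\Omega u_i^{k+1}\,d\Omega\bigr)\|C_i^k-C_i^{k+1}\|_2^2\geq 0$ and $I_2=\tfrac{\lambda\sqrt{\pi}}{\sqrt{\delta t}}\sum_i\int_\Omega (u_i^{k+1}-u_i^k)\,G_{\delta t}*(u_i^{k+1}-u_i^k)\,d\Omega\geq 0$. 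Your enlarged functional $E(u,C)$ and the concavity/tangent-plane argument amount to the same two inequalities: step (ii), $E(u^{k+1},C^{k+1})\leq E(u^{k+1},C^k)$, is precisely $I_1\geq 0$; and the concavity of $E(\cdot,C^k)$ on $\mathcal{K}$ encodes precisely $I_2\geq 0$, since for a quadratic the tangent-plane defect equals the second-order remainder. What your packaging buys is a clean structural interpretation---majorization--minimization with alternating descent in $u$ and $C$---that makes the role of each ingredient transparent and generalizes readily to any positive-definite kernel; the paper's direct computation is more self-contained in that it never names the concavity and simply exhibits the nonnegative quantities.
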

\begin{proof}See Appendix B.
\end{proof}

\bigskip

We are then led to the following iterative thresholding algorithm:
\vspace{0.5cm}

{{\bf Algorithm: I}
\it
\begin{itemize}\item[]
\begin{description}
\item[Step 0.] Given an initial partition  $\Omega_1^0, ..., \Omega_n^0\subset\Omega$ and the corresponding $u_1^0=\chi_{\Omega_1^0}, ..., u_n^0=\chi_{\Omega_n^0}$.
Set a tolerance parameter $\tau>0.$
\item[Step 1.] Given $k^{th}$ iteration  $(u_1^k, \cdots, u_n^k)\subset \mathcal{S}$, we compute  $g_i^k $ and the following convolutions for $i=1,\cdots,n$:
\begin{align}
\phi_i^k:&=g_i^k+\frac{2\lambda\sqrt{\pi}}{\sqrt{\delta t}} (1-G_{\delta t} *u_i^k)
\end{align}
\item[Step 2.] Thresholding:  Let
\begin{align}
\Omega_i^{k+1}=\left\lbrace x: \phi_i^k(x)<\min\limits_{j\neq i}\phi_j^k(x)\right\rbrace
\end{align}
and define $u_i^{k+1}=\chi_{\Omega_i^{k+1}}$ where $\chi_{\Omega_i^{k+1}}$ represents the charecteristic function of region $\Omega_i^{k+1}$
\item[Step 3.]
Let the normalized $L^2$ difference between successive iterations be
\[   e^{k+1}= \frac{1}{|\Omega|}\int_{\Omega}\sum\limits_{i=1}^n |u_i^{k+1}-u_i^{k}|^2 d\Omega.
\]
If $ e^{k+1}  \leq \tau$, stop. Otherwise, go back to step 1.
\end{description}
\end{itemize}
}
\vspace{0.5cm}

\begin{remark}
The  convolutions in Step 1 are computed efficiently using FFT  with a  computational complexity of $O(Nlog(N))$,  where $N$ is the total number of pixels.  Therefore the total computational cost at each iteration is also $O(Nlog(N))$.
\end{remark}

\begin{remark}
In Step 3,  $e^k$ measures   the percentage  of pixels on which $u_i^{k+1}\ne u_i^k$. Therefore the tolerance $\tau$ specifies the threshold of the percentage  of  pixels changing during the iteration  below which the iteration stops.
\end{remark}


\section{Numerical  Results}
\label{Sec:results}  We now present numerical examples to illustrate the performance of our algorithm.    We implement the algorithm in MATLAB. All the computations are carried out on a MacBook Pro laptop with a 3.0GHz Intel(R) Core(TM) i7 processor and 8GB of RAM.



%

\subsection{Example 1: Cameraman}
We first test our algorithm on the standard cameraman image using two-phase segmentation. Figure~\ref{Fig:CameramanGiven} is the original  image.   We start with the initial contour  given in Fig.~\ref{Fig:CameramanInitial}.   We choose $\delta t = 0.03$ and  ${\lambda} = 0.01$.  Our algorithm takes only 15 iterations to converge to a complete steady state, i.e. $e^k=0$ (for $k=15$) with a total computation time of only $0.1188$ seconds.  Fig.~\ref{Fig:CameramanFinal} gives the final segmentation contour.   We also plot the normalized energy $ \mathcal{E}^{\delta t}/|\Omega|$  as a function of the iteration number $k$ in Fig.\ref{Fig:Cameramanenergy}, which verifies the monotone decay  of the energy.  In fact,  the energy decays quickly in the first few iterations and almost reaches steady state in less than 10 iterations.
\begin{figure}[h]
\centering
  \subfigure[Given Image.\label{Fig:CameramanGiven}]  {\includegraphics[width=4cm]{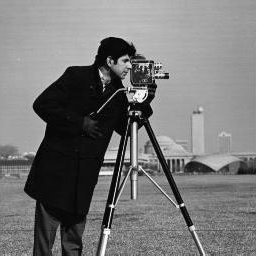} }
  \subfigure[Initial Contour.\label{Fig:CameramanInitial}]{\includegraphics[width=4cm]{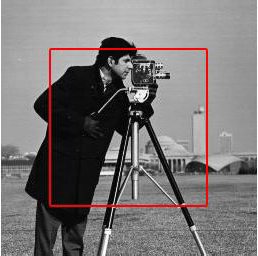}}
   \subfigure[Final Contour.\label{Fig:CameramanFinal}]{ \includegraphics[width=4cm]{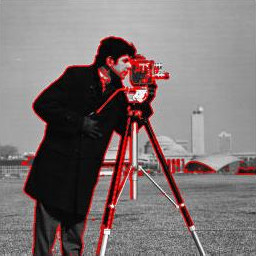}}
   \caption{Segmentation results for the classic cameraman image with $\delta t = 0.03$ and ${\lambda} = 0.01$. The algorithm  converges in 15 iterations with a computational time of $0.1188$ seconds}
\end{figure}


\begin{figure}[h]
\centering
        \includegraphics[width=0.7\textwidth, height=0.4\textwidth]{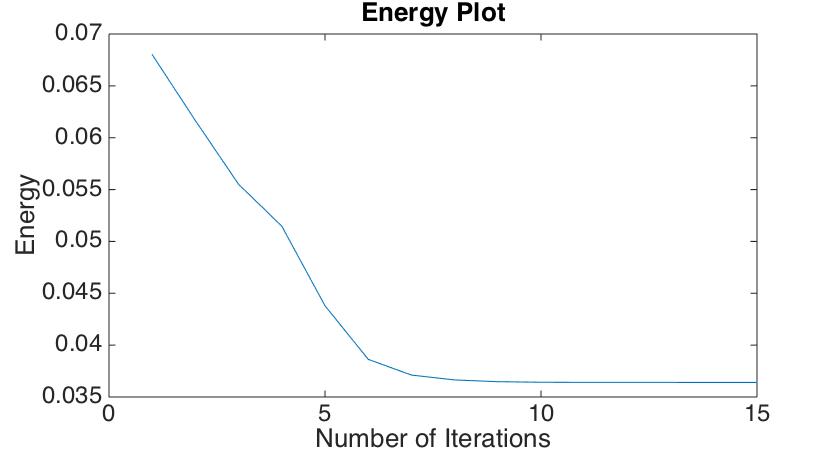}
  \caption{\label{Fig:Cameramanenergy}Energy curve for the iteration algorithm with $\delta t = 0.03$ and ${\lambda} = 0.01$. }
\end{figure}

To study the effect of the parameter $\lambda$ in  the energy (\ref{energy1}), we run our algorithm on the same test image for three different values of $\lambda=0.001, 0.01$ and $0.025$  but  with a fixed $\delta t=0.03$.   The final segmentation contours together with the energy curves are shown in Fig. \ref{Fig:CameramanELmd}. As the figure shows, larger $\lambda=0.025$ turns to smooth out the small-scale structures while smaller $\lambda=0.001$ would pick up more noisy regions.  This is easy to understand since $\lambda$ measures the relative importance of  the contour length  and  the data term in the Chan-Vese  functional to be minimized.  A larger $\lambda$ tends to shorten the total contour length and therefore does not favor  small-scale structures.  On the other hand,  convergence is much faster for a smaller $\lambda$ while a  larger $\lambda$ would require more iterations to converge as shown by the energy curves.


\begin{figure}[h]
\centering
\subfigure{\includegraphics[width=4cm]{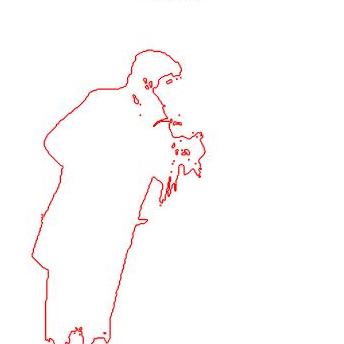} }
\subfigure{\includegraphics[width=4cm]{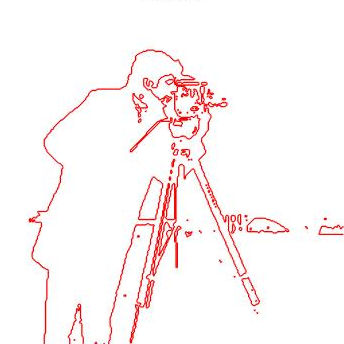}}
\subfigure{\includegraphics[width=4cm]{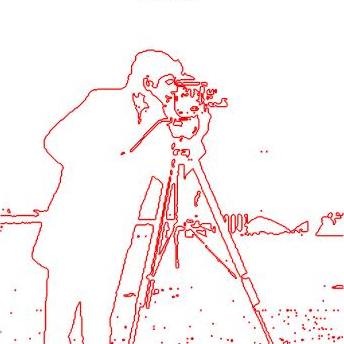}}		
  ~
\subfigure[${\lambda} = 0.025$. \label{Fig:CameramanELmd1}]{\includegraphics[width=4cm]{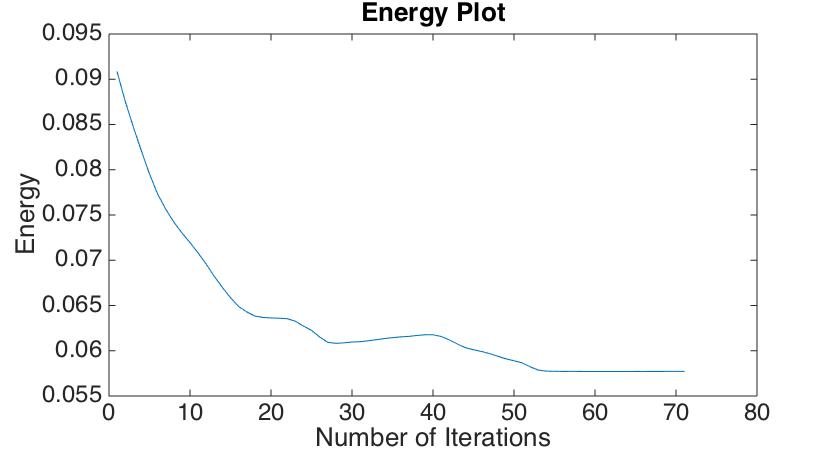}}		
\subfigure[${\lambda} = 0.01$.\label{Fig:CameramanELmd2}]{\includegraphics[width=4cm]{energy_2.jpg}}
\subfigure[${\lambda} = 0.001$.\label{Fig:CameramanELmd3}]{ \includegraphics[width=4cm]{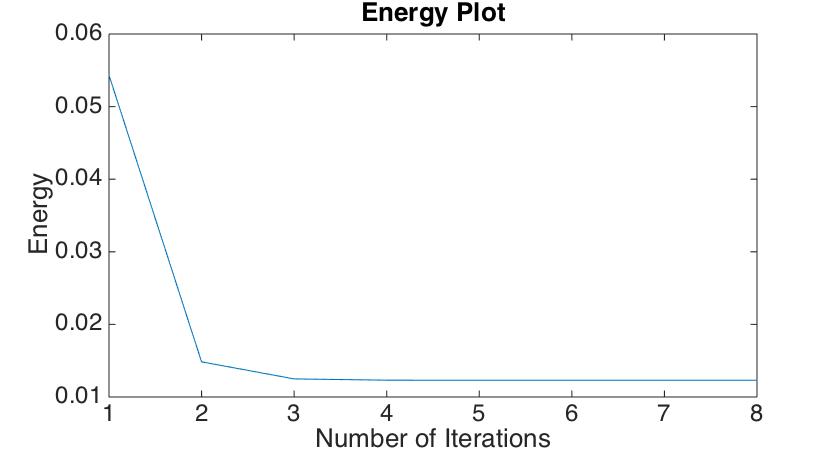}}	
\caption{Segmentation contours  and energy curves for  $\delta t= 0.03$ and different ${\lambda} $ values. }
	\label{Fig:CameramanELmd}
\end{figure}

\subsection{Example 2: A synthetic four-phase image}  We next use a synthetic color image  given in Fig. \ref{Fig:10242}. The image $f$ is a vector-valued function.   Gaussian noise is added with mean $0$ and variance $0.04$ to each component of image $f$.  The initial contours are given in Fig.~\ref{Fig:512i}.  We apply our four-phase  algorithm   to the image with three different resolutions from $128\times128$
to $512\times 512$. In each case,   $\delta t=0.01$ and ${\lambda}=0.003$. The algorithm converges in $7\sim8$ iterations  for all resolutions with runtimes of $0.0444, 0.1333, 0.6706$ seconds respectively, which demonstrates good stability of and robustness of our method.   Figures. \ref{Fig:128c}-\ref{Fig:512c} show the final segmentation result.

\begin{figure}[h]
\centering
  \subfigure[Image with Noise.\label{Fig:10242}]{\includegraphics[width=4cm]{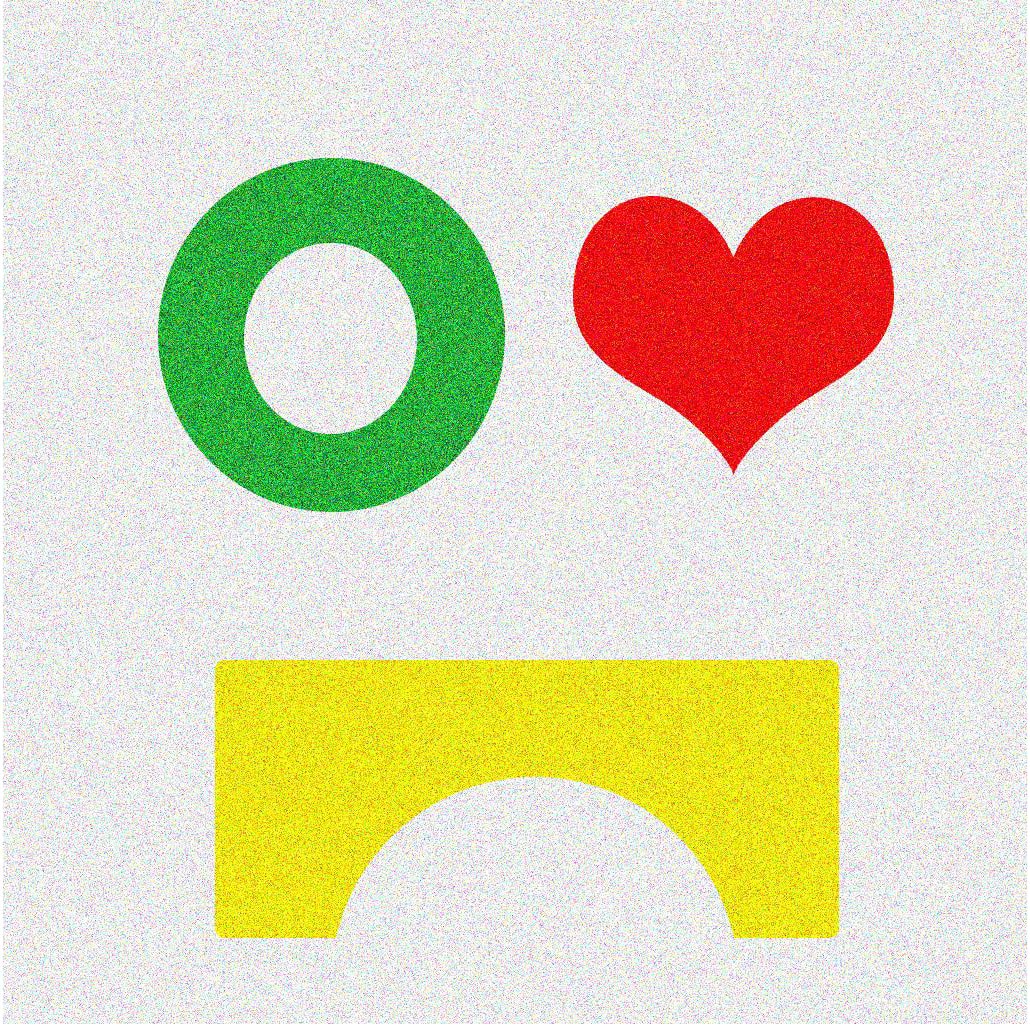}}
  \subfigure[Initial Contour.\label{Fig:512i}]{\includegraphics[width=4cm]{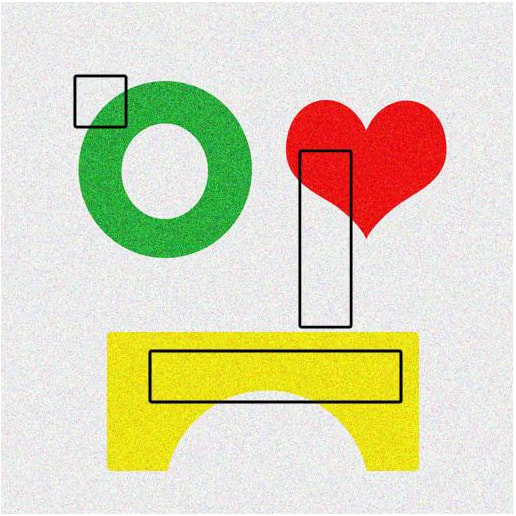}}
  \\
  \subfigure[$128\times 128$.\label{Fig:128c}]{\includegraphics[width=4cm]{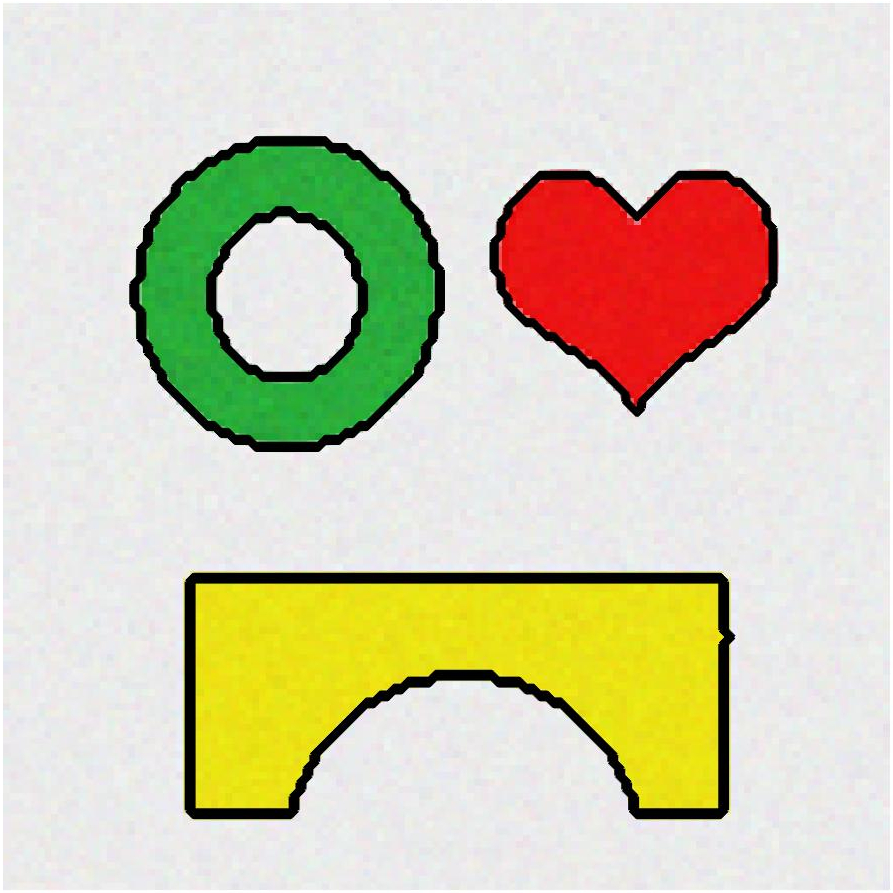}}
   \subfigure[$256\times 256$.\label{Fig:256c}]{ \includegraphics[width=4cm]{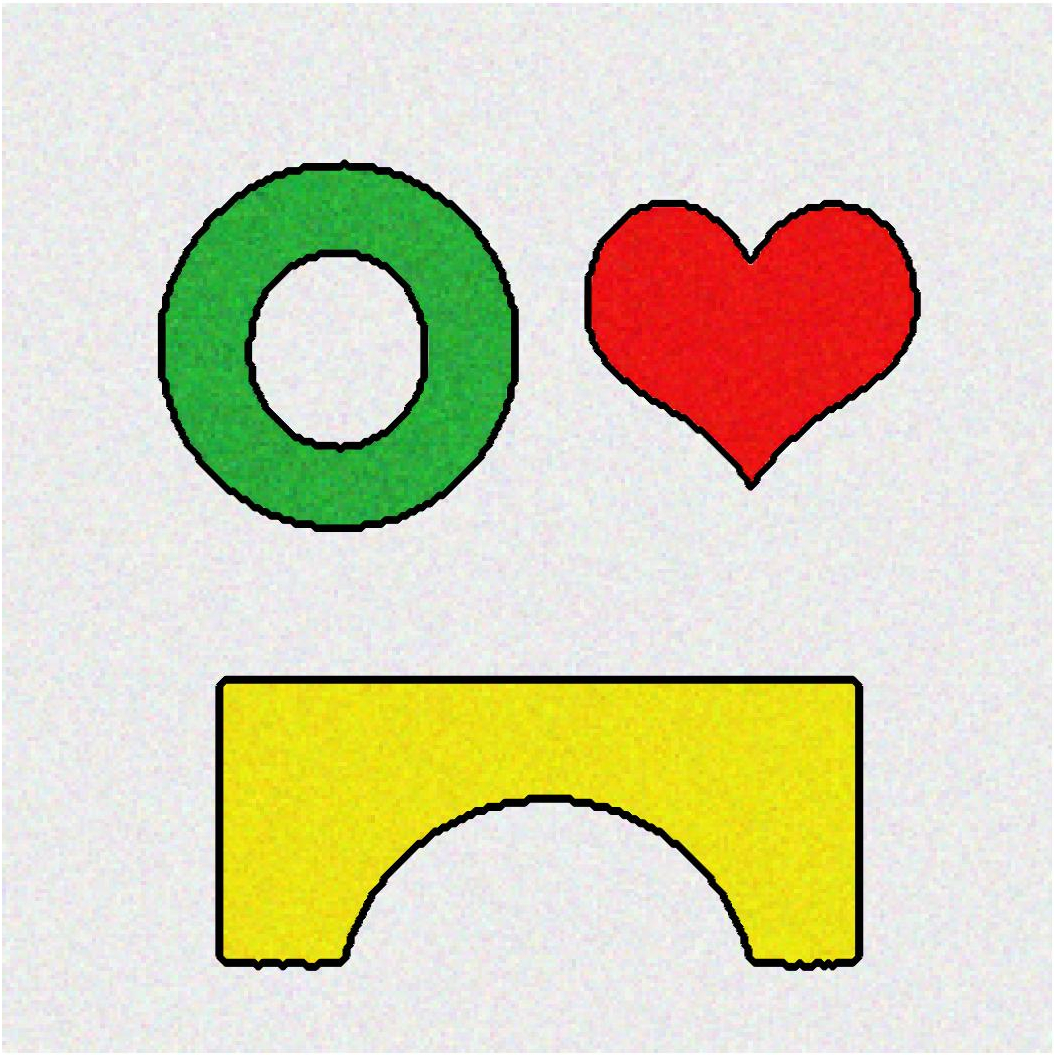}}
  \subfigure[$512\times 512$.\label{Fig:512c}]{\includegraphics[width=4cm]{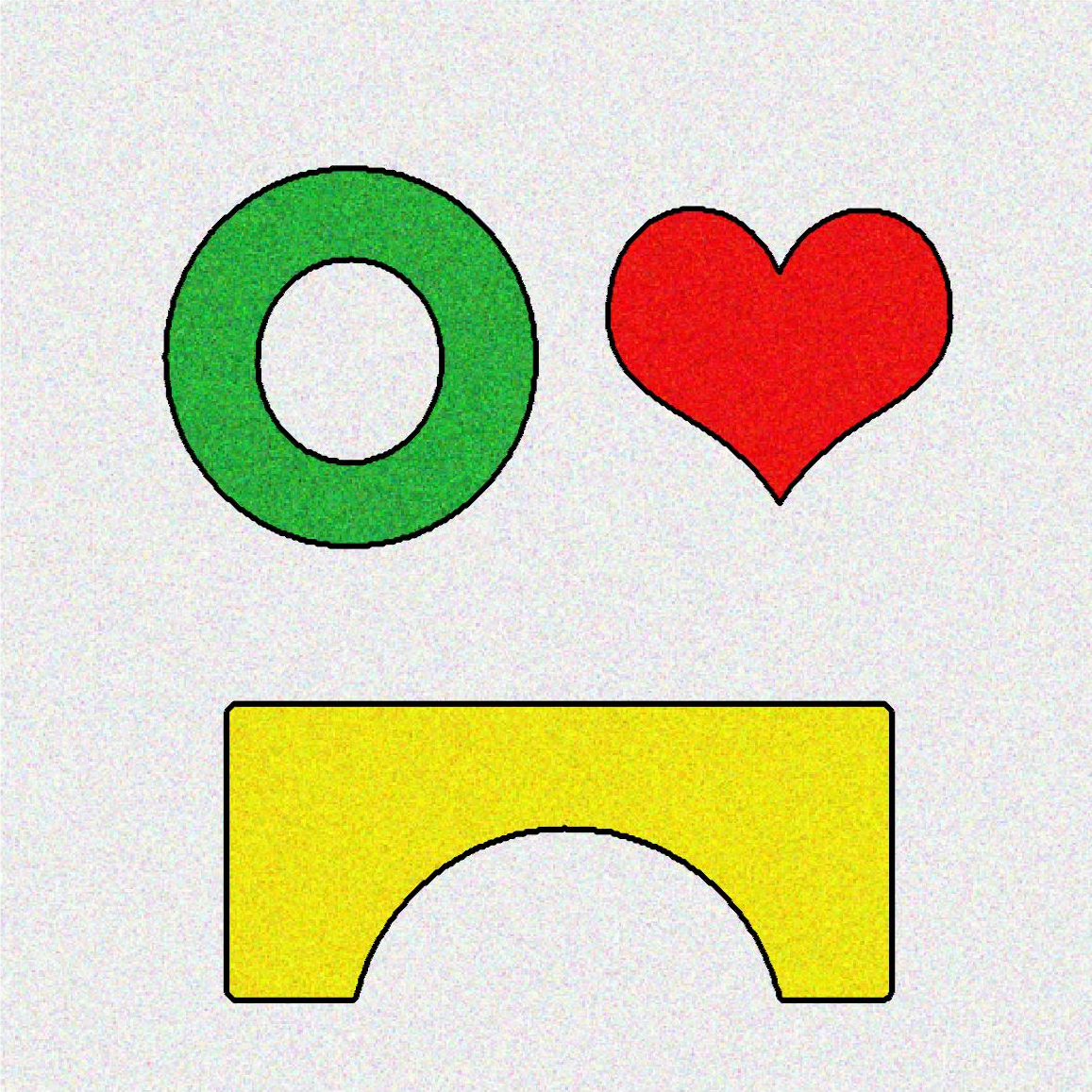}}
   \caption{Segmentation for images with different resolutions and with the parameters   $\delta t=0.01$ and ${\lambda}=0.003$} \label{Fig:MultiResolution}
\end{figure}

\subsection{Example 3: Flower color image} We now consider an image containing flowers of different colors  in
Fig.~\ref{Fig:Flowers}.  We first use a two-phase segmentation algorithm with $\delta t = 0.01$ and ${\lambda} = 0.005$ and the initial contour in Fig. \ref{Fig:FlowersInitial2}. The algorithm converges in $20$ iterations with a runtime of $0.6751$ seconds. The final segmentation result is given in Fig.~\ref{Fig:FlowersFinal2}.  We also use a four-phase segmentation algorithm with $\delta t = 0.01$ and ${\lambda} = 0.003$ and the initial contour in Fig. \ref{Fig:FlowersInitial4}. The algorithm converges in $18$ iterations with a runtime of $1.1007$ seconds.  The final segmentation result is given in Fig.~\ref{Fig:FlowersFinal4} and \ref{Fig:FlowersFinal}

\begin{figure}[h]
\centering
 \subfigure[Given Color Image. \label{Fig:Flowers}]{\includegraphics[width=4cm]{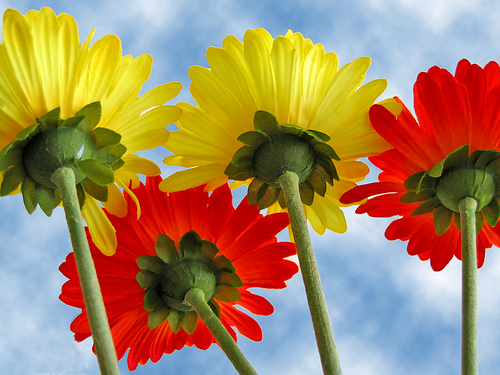}}
~
\subfigure[Initial Contour. \label{Fig:FlowersInitial2}]{\includegraphics[width=4cm]{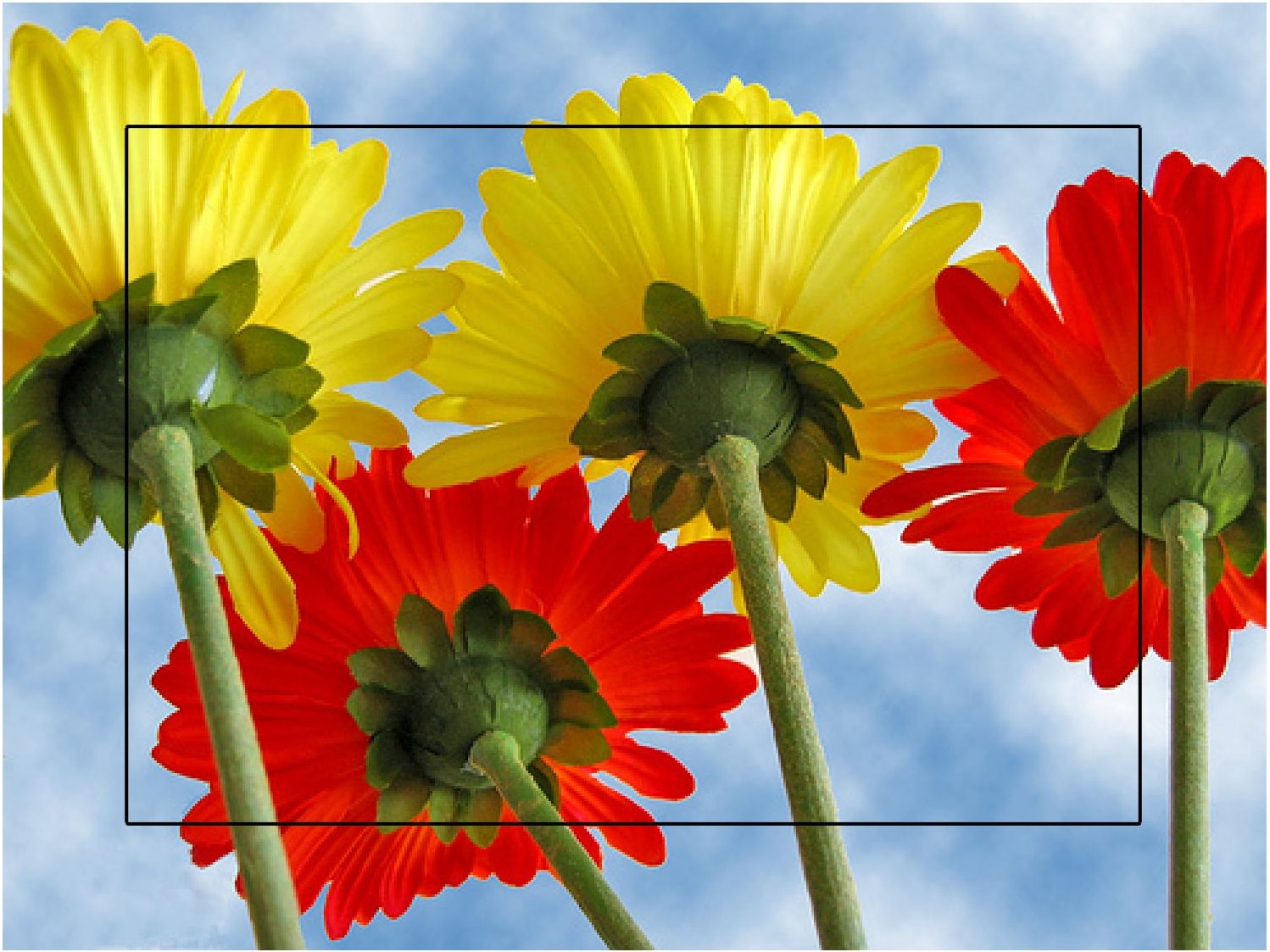}}
~
\subfigure[Final Contour.\label{Fig:FlowersFinal2}]{ \includegraphics[width=4cm]{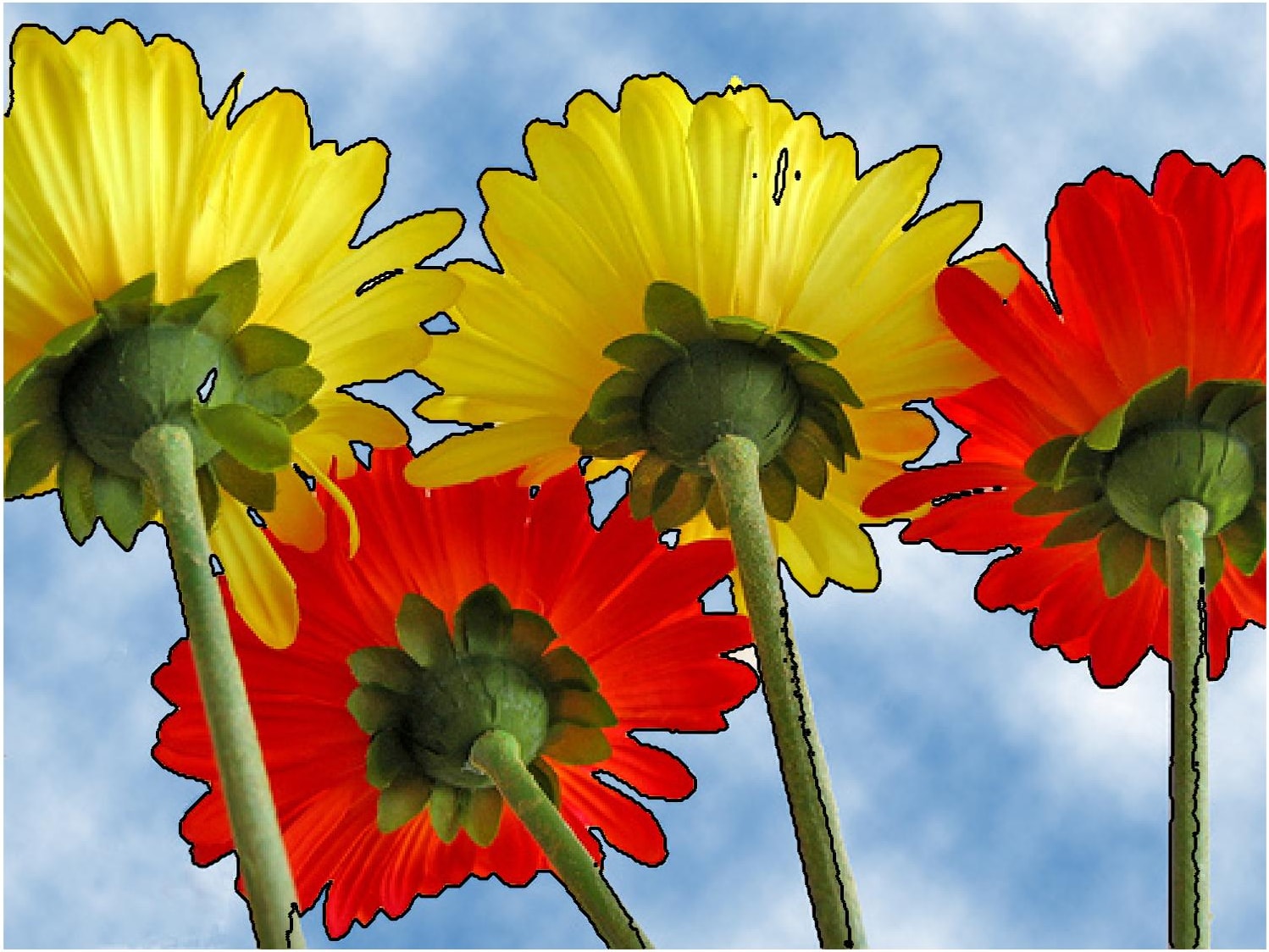}}
 \caption{Two-phase segmentation for a $375\times 500$ RGB image and with parameters   $\delta t = 0.01$ and  ${\lambda} = 0.005$.}
 \end{figure}

   \begin{figure}
   \centering
  \subfigure[Initial Contour.\label{Fig:FlowersInitial4}]{  \includegraphics[width=4cm]{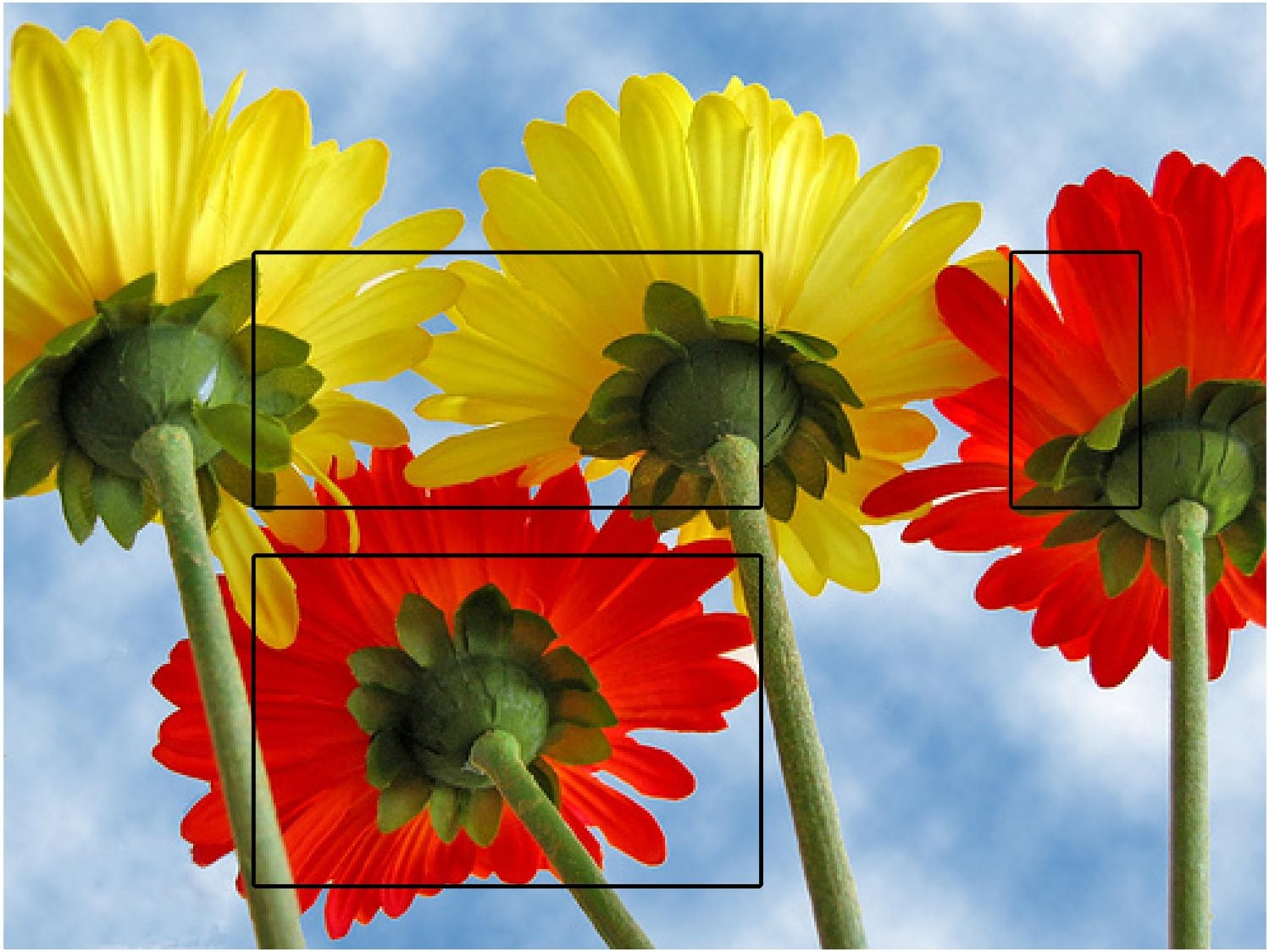}}
 ~
  \subfigure[Final Contour.\label{Fig:FlowersFinal4}]{ \includegraphics[width=4cm]{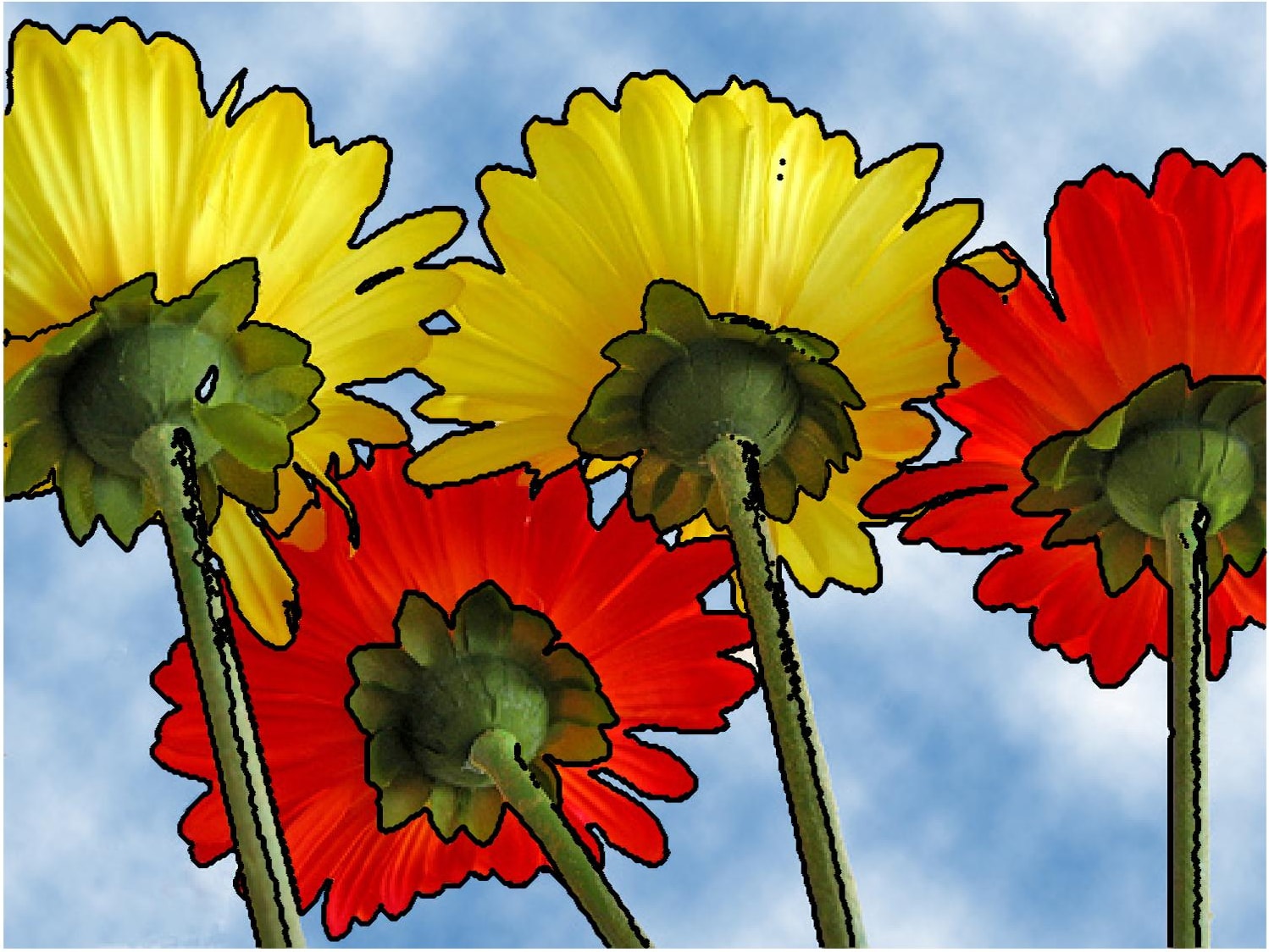}}
    ~
 \subfigure[Four Segments.\label{Fig:FlowersFinal}]{  \includegraphics[width=4cm]{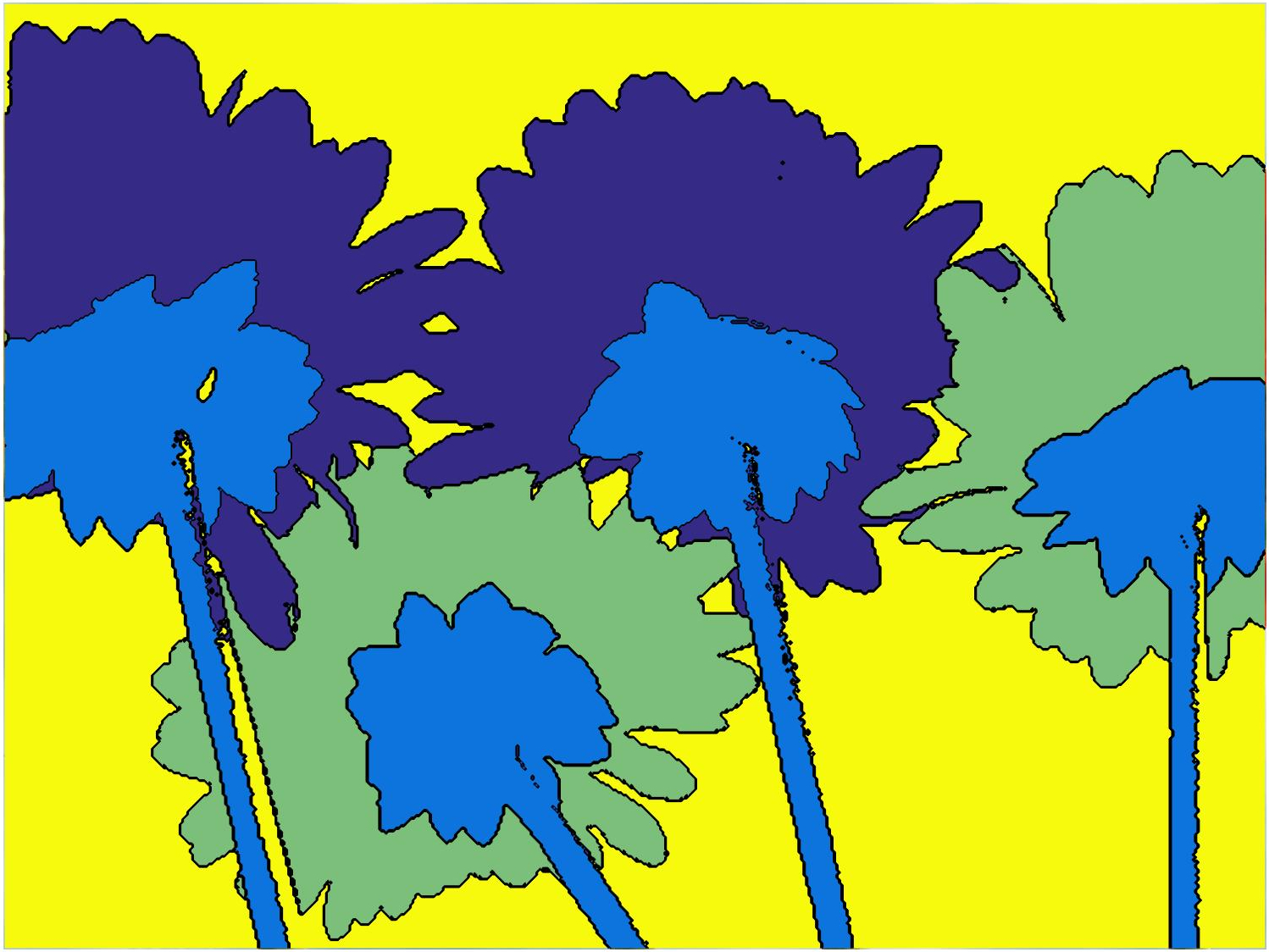}}
   \caption{Four phase segmentation for a $375\times 500$ RGB image with  $\delta t = 0.01$ and ${\lambda} = 0.003$.}
\end{figure}

\section{Conclusions}
We have proposed an efficient iterative thresholding algorithm for the Chan-Vese model for multi-phase  image segmentation.
The algorithm works by alternating the convolution step with the thresholding step and has the optimal computational complexity of $O(N \log N)$ per iteration.  We  prove that the iterative algorithm has the property of  total energy decay.  The numerical results show that the method is stable and the number of iterations before convergence is independent of the spacial resolution (for a given image).  The relative importance of the different effects in the energy functional is studied by tuning the parameter $\lambda$.  Our numerical results also show that the proposed method is competitive (in terms of efficiency) with  many existing methods for image segmentation.

\bigskip

\appendix
{
\section{Proof of Lemma \ref{Lemma}}

We  prove the lemma for the general case that $n\ge 2$ and $d\ge 1$ (i.e. $f$ is a $d$-dimensional vector valued function) by contradiction. Let $v=(v_1,\cdots,v_n)\in \mathcal{K}$ be a minimizer of  $\mathcal{E}^{\delta t}(u)+\mathcal{L}(u)$ on $\mathcal{K}$. If  $v \notin \mathcal{S}$, then  there exists a set $A \subseteq \Omega$ ($|A|>0$) and a constant $0 < \epsilon < \frac{1}{2}$ such that  for some $k,l \in \{1,\cdots,n\}$ with $k \neq l$,
\begin{align*}
v_k(x),v_l(x)\in (\epsilon,1-\epsilon), \quad  \forall x\in A.
\end{align*}
Denote
\begin{align*}
u_m^t(x,t)=v_m(x)+t(\delta_{m,l}-\delta_{m,k}) \chi_{A}(x)
\end{align*}
for $m=1,\cdots,n$ where $\chi_{A}(x)$ represents the characteristic function of region $A$ and
\begin{align*}
\delta_{m,l}=\left\lbrace\begin{array}{cc}
1 & m=l\\
0 & m \neq l.
\end{array}\right.
\end{align*}
When $-\epsilon\leq t \leq \epsilon$, we have $u_m^t(x,t)\geq 0$ and $\sum\limits_{m=1}^{n}u_m^t(x,t)=1$ so that $u^t(x,t)=(u_1^t(x,t),\cdots,u_n^t(x,t))\in \mathcal{K}$.
Now denote
\begin{align}
f^m = \int_{\Omega} v_m fd \Omega, \quad  V^m =\int_{\Omega} v_m d \Omega,  \quad  f^A = \int_{\Omega} \chi_{A} fd \Omega. \label{Eq:Notation1}
\end{align}
Then
\begin{align}
\int_{\Omega} u_m^t f d\Omega &=\int_{\Omega} v_m f d\Omega+t\int_{\Omega} (\delta_{ml}-\delta_{mk}) \chi_{A} f d\Omega \nonumber \\
&= f^m+t (\delta_{ml}-\delta_{mk})f^A\\
\int_{\Omega} u_m^t  d\Omega &=\int_{\Omega} v_m  d\Omega+t\int_{\Omega} (\delta_{ml}-\delta_{mk}) \chi_{A} d\Omega \nonumber \\
&= V^m+t (\delta_{ml}-\delta_{mk})|A|
\end{align}
Let
\[  C_m=\frac{\int_{\Omega} u_m^t f d\Omega}{\int_{\Omega} u_m^t  d\Omega}. \]
It is easy to see that $C_m$ depends on $t$ only
when $m=l$ or $k$. We have
$$C_l=\frac{f^l+tf^A}{V^l+t|A|} \quad  \text{and} \quad  C_k=\frac{f^k-tf^A}{V^k-t|A|}.$$
Then, we calculate the first  and second order derivatives of $C_l$ and $ C_k$ with respect to $t$ as follows:
\begin{align}
\frac{d C_l}{d t}&=\frac{f^A}{V^l+t|A|}-\frac{|A|(f^l+tf^A)}{(V^l+t|A|)^2} \nonumber \\
\frac{d C_k}{d t}&=-\frac{f^A}{V^k-t|A|}+\frac{|A|(f^k-tf^A)}{(V^k-t|A|)^2} \nonumber\\
\frac{d^2 C_l}{d t^2}&=-\frac{2|A|f^A}{(V^l+t|A|)^2}+\frac{2|A|^2(f^l+tf^A)}{(V^l+t|A|)^3} \label{Eq:RelaxDeri} \\
\frac{d^2 C_k}{d t^2}&=-\frac{2|A|f^A}{(V^k-t|A|)^2}+\frac{2|A|^2(f^k-tf^A)}{(V^k-t|A|)^3} \nonumber
\end{align}
A direct calculation then gives
\begin{align}
\frac{d^2 \mathcal{E}^{\delta t}}{d t^2} =& \int_{\Omega} \sum\limits_{i=1}^{n} \left(4\frac{d u_i^t}{d t }\langle C_i-f,\frac{d C_i}{d t}\rangle+2 u_i^t \langle C_i-f,\frac{d ^2 C_i}{d t^2}\rangle+2 u_i^t ||\frac{d  C_i}{d t}||_2^2\right) d\Omega.\nonumber \\
&-4\frac{\lambda\sqrt{\pi}}{\sqrt{\delta t}}\int_{\Omega}\chi_{A}G_{\delta t}*\chi_{A}d\Omega \nonumber\\
=&4 \int_{\Omega} \chi_{A} \langle C_l-f, \frac{d C_l}{d t} \rangle d\Omega -4 \int_{\Omega} \chi_{A} \langle C_k-f,\frac{d C_k}{d t} \rangle d\Omega \label{Eq:RelaxDoubleD} \\
&+2 \int_{\Omega}u_l^t\langle C_l-f,\frac{d ^2 C_l}{d t^2}\rangle d\Omega +2 \int_{\Omega}u_k^t \langle C_k-f ,\frac{d ^2 C_k}{d t^2}\rangle d\Omega \nonumber \\
&+2 \int_{\Omega}u_l^t ||\frac{d  C_l}{d t}||_2^2d\Omega +2 \int_{\Omega}u_k^t||\frac{d C_k}{d t}||_2^2d\Omega \nonumber \\
&-4\frac{\lambda\sqrt{\pi}}{\sqrt{\delta t}}\int_{\Omega}\chi_{A}G_{\delta t}*\chi_{A}d\Omega  \nonumber
\end{align}
where $\langle\alpha,\beta\rangle = \sum\limits_{i=1}^n \alpha_i\beta_i $ for $\alpha, \beta \in R^n$.
Evaluating at $t=0$ and substituting (\ref{Eq:RelaxDeri}) into (\ref{Eq:RelaxDoubleD}), we have
\begin{align}
\left. \frac{d^2 \mathcal{E}^{\delta t}}{d t^2}\right|_{t=0}=&  \int_{\Omega}4 \chi_{A}\langle \frac{f^l}{V^l}-f,\frac{f^A}{V^l}-\frac{|A|f^l}{(V^l)^2} \rangle +4  \chi_{A}\langle \frac{f^k}{V^k}-f, \frac{f^A}{V^k}-\frac{|A|f^k}{(V^k)^2}\rangle d\Omega \label{Eq:Relax1stTerm}\\
&+2 \int_{\Omega}v_l\langle \frac{f^l}{V^l}-f, -\frac{2|A|f^A}{(V^l)^2}+\frac{2|A|^2f^l}{(V^l)^3} \rangle d\Omega  \label{Eq:Relax2ndTerm} \\
&+2 \int_{\Omega}v_k\langle  \frac{f^k}{V^k}-f, -\frac{2|A|f^A}{(V^k)^2}+\frac{2|A|^2f^k}{(V^k)^3}\rangle d\Omega  \label{Eq:Relax3rdTerm} \\
&+2 \int_{\Omega}v_l ||\frac{f^A}{V^l}-\frac{|A|f^l}{(V^l)^2}||_2^2 d\Omega\label{Eq:Relax4thTerm}\\
&+2 \int_{\Omega}v_k ||\frac{f^A}{V^k}-\frac{|A|f^k}{(V^k)^2}||_2^2 d\Omega \label{Eq:Relax5thTerm}\\
& -4\frac{\lambda\sqrt{\pi}}{\sqrt{\delta t}}\int_{\Omega}\chi_{A}G_{\delta t}*\chi_{A}d\Omega.\label{Eq:Relax6thTerm}
\end{align}
Then, using (\ref{Eq:Notation1}) and the definition of $|A|$, we can calculate the above integrals (note  that $f^l,f^k, f^A,V^l,V^k$ and $|A|$ in the integrand are all independent of $\Omega$).  Therefore,
\begin{align}
&(\ref{Eq:Relax1stTerm})+(\ref{Eq:Relax4thTerm}) +(\ref{Eq:Relax5thTerm})\nonumber\\
=&-\frac{2}{V^l}||\frac{|A|f^l}{V^l}-f^A||_2^2-\frac{2}{V^k}||\frac{|A|f^k}{V^k}-f^A||_2^2 <0.
\end{align}
Similarly, direct calculations show that $(\ref{Eq:Relax2ndTerm})=0$ and $(\ref{Eq:Relax3rdTerm})=0$.
It is obvious that
\begin{align*}
-4\frac{\lambda\sqrt{\pi}}{\sqrt{\delta t}}\int_{\Omega}\chi_{A}G_{\delta t}*\chi_{A}d\Omega<0.
\end{align*}
Combining the above, we have
\begin{align*}
\left. \frac{d^2 \mathcal{E}^{\delta t}}{d t^2}\right|_{t=0}<0.
\end{align*}
Thus, $v(x) = u(x, 0)$ cannot be a minimizer. This contradicts the assumption.

\section{Proof of Theorem \ref{Theorem}}
From (\ref{linear1}), we have
\begin{align*}
&\mathcal{E}^{\delta t}(u_1^k,\cdots,u_n^k)+\sum\limits_{i=1}^n\int_{\Omega} \sum\limits_{j\neq i, j=1}^n\frac{\lambda\sqrt{\pi}}{\sqrt{\delta t}}u_i^k G_{\delta t}*u_j^k d\Omega = \mathcal{L}(u_1^k,\cdots,u_n^k,u_1^k,\cdots,u_n^k)\\
&\geq \mathcal{L}(u_1^{k+1},\cdots,u_n^{k+1},u_1^k,\cdots,u_n^k) =\mathcal{E}^{\delta t}(u_1^{k+1},\cdots,u_n^{k+1})\\
&+\sum\limits_{i=1}^{n}\int_{\Omega}\left(u_i^{k+1}(g_i^k-g_i^{k+1})+ \sum\limits_{ j=1,j\neq i}^{n} \frac{2\lambda\sqrt{\pi}}{\sqrt{\delta t}} u_i^{k+1} G_{\delta t} *u_j^k\right)d\Omega \\
&-\sum\limits_{i=1}^n\int_{\Omega} \sum\limits_{j\neq i, j=1}^n\frac{\lambda\sqrt{\pi}}{\sqrt{\delta t}}u_i^{k+1} G_{\delta t}*u_j^{k+1} d\Omega.
\end{align*}
That leads to
\begin{align}
\mathcal{E}^{\delta t}(u_1^k,\cdots,u_n^k)\geq\mathcal{E}^{\delta t}(u_1^{k+1},\cdots,u_n^{k+1})+I  \label{Eq:StabilityIneq}
\end{align}
with
\begin{align*}
I=&\sum\limits_{i=1}^{n}\int_{\Omega}\left(u_i^{k+1}(g_i^k-g_i^{k+1})+ \sum\limits_{ j=1,j\neq i}^{n} \frac{2\lambda\sqrt{\pi}}{\sqrt{\delta t}} u_i^{k+1} G_{\delta t} *u_j^k\right)d\Omega \\
&-\sum\limits_{i=1}^n\int_{\Omega} \sum\limits_{j\neq i, j=1}^n\frac{\lambda\sqrt{\pi}}{\sqrt{\delta t}}u_i^{k+1} G_{\delta t}*u_j^{k+1} d\Omega\\
&-\sum\limits_{i=1}^n\int_{\Omega} \sum\limits_{j\neq i, j=1}^n\frac{\lambda\sqrt{\pi}}{\sqrt{\delta t}}u_i^k G_{\delta t}*u_j^k d\Omega \\
=&I_1+I_2
\end{align*}
where \begin{align*}
I_1=&\sum\limits_{i=1}^{n}\int_{\Omega}u_i^{k+1}(g_i^k-g_i^{k+1}) d\Omega\\
I_2=&\sum\limits_{i=1}^{n}\sum\limits_{ j=1,j\neq i}^{n}\int_{\Omega} \frac{\lambda\sqrt{\pi}}{\sqrt{\delta t}} u_i^{k+1} G_{\delta t} *(u_j^k-u_j^{k+1}) d\Omega\\
&-\sum\limits_{i=1}^{n}\sum\limits_{ j=1,j\neq i}^{n}\int_{\Omega} \frac{\lambda\sqrt{\pi}}{\sqrt{\delta t}} (u_i^{k}-u_i^{k+1}) G_{\delta t} *u_j^k d\Omega.
\end{align*}
Now, we only need to prove that $I_1\geq0$ and $I_2\geq0$. From the definition of $C_i^{k+1}$ and using  the fact that $\int_{\Omega}u_i^{k+1}f d\Omega =\int_{\Omega}u_i^{k+1}d\Omega C_i^{k+1}$, we have
\begin{align}
I_1=&\sum\limits_{i=1}^{n}\int_{\Omega}u_i^{k+1}(||C_i^k-f||_2^2-||C_i^{k+1}-f||_2^2) d\Omega \nonumber\\
=&\sum\limits_{i=1}^{n}\int_{\Omega}u_i^{k+1} (||C_i^k||_2^2-||C_i^{k+1}||_2^2-2\langle C_i^k-C_i^{k+1}, f \rangle) d\Omega \nonumber\\
=&\sum\limits_{i=1}^{n}\left\lbrace\int_{\Omega}u_i^{k+1}d\Omega (||C_i^k||_2^2-||C_i^{k+1}||_2^2-2\langle C_i^k-C_i^{k+1}, C_i^{k+1} \rangle) \right\rbrace \label{Eq:StabilityI1}\\
=&\sum\limits_{i=1}^{n}\left\lbrace\int_{\Omega}u_i^{k+1}d\Omega ||C_i^k-C_i^{k+1}||_2^2 \right\rbrace \geq 0.\nonumber
\end{align}
By changing the order of the two summations in the second part of $I_2$ and using the fact that $\sum\limits_{i=1}^n u_i^k=1$ for any $k$, we obtain
\begin{align}
I_2=&\sum\limits_{i=1}^{n}\sum\limits_{ j=1,j\neq i}^{n}\int_{\Omega} \frac{\lambda\sqrt{\pi}}{\sqrt{\delta t}} (u_i^{k+1}-u_i^k) G_{\delta t} *(u_j^k-u_j^{k+1}) d\Omega \nonumber\\
 =&\sum\limits_{i=1}^{n}\int_{\Omega} \frac{\lambda\sqrt{\pi}}{\sqrt{\delta t}} (u_i^{k+1}-u_i^k) G_{\delta t} *\left(\sum\limits_{ j=1,j\neq i}^{n}(u_j^k-u_j^{k+1}) \right)d\Omega \nonumber \\
=&\sum\limits_{i=1}^{n}\int_{\Omega} \frac{\lambda\sqrt{\pi}}{\sqrt{\delta t}} (u_i^{k+1}-u_i^k) G_{\delta t} *(1-u_i^k-(1-u_i^{k+1}))d\Omega \label{Eq:StabilityI2}\\
=&\sum\limits_{i=1}^{n}\int_{\Omega} \frac{\lambda\sqrt{\pi}}{\sqrt{\delta t}} (u_i^{k+1}-u_i^k) G_{\delta t} *(u_i^{k+1}-u_i^k)d\Omega\geq0. \nonumber
\end{align}
Combining (\ref{Eq:StabilityIneq}),  (\ref{Eq:StabilityI1}) and (\ref{Eq:StabilityI2})  gives (\ref{Eq:StabilityInProof}).
}

\bibliographystyle{siam}
\bibliography{reference1}

\begin{thebibliography}{10}

\bibitem{alberti1998non}
{\sc G.~Alberti and G.~Bellettini}, {\em A non-local anisotropic model for
  phase transitions: asymptotic behaviour of rescaled energies}, Euro. J. Appl.
  Math., 9 (1998), pp.~261--284.

\bibitem{ambrosio1990approximation}
{\sc L.~Ambrosio and V.~M. Tortorelli}, {\em Approximation of functionals
  depending on jumps by elliptic functionals via {$\Gamma$}-convergence}, Comm.
  Pure Appl. Math., 43 (1990), pp.~999--1036.

\bibitem{braides1998approximation}
{\sc A.~Braides}, {\em Approximation of free-discontinuity problems}, no.~1694,
  Springer Science \& Business Media, 1998.

\bibitem{cai2013two}
{\sc X.~Cai, R.~Chan, and T.~Zeng}, {\em A two-stage image segmentation method
  using a convex variant of the mumford--shah model and thresholding}, SIAM J.
  Imaging Sci., 6 (2013), pp.~368--390.

\bibitem{chan_algorithms_2006}
{\sc T.~F. Chan, S.~Esedo{\=g}lu, and M.~Nikolova}, {\em Algorithms for finding
  global minimizers of image segmentation and denoising models}, SIAM J. Appl.
  Math., 66 (2006), pp.~1632--1648.

\bibitem{chan2001active}
{\sc T.~F. Chan and L.~A. Vese}, {\em Active contours without edges}, IEEE-IP,
  10 (2001), pp.~266--277.

\bibitem{dong2010frame}
{\sc B.~Dong, A.~Chien, and Z.~Shen}, {\em Frame based segmentation for medical
  images}, Commun. Math. Sci., 9 (2010), pp.~551--559.

\bibitem{dong2012wavelet}
{\sc B.~Dong, H.~Ji, J.~Li, Z.~Shen, and Y.~Xu}, {\em Wavelet frame based blind
  image inpainting}, Appl. Comput. Harmon. Anal., 32 (2012), pp.~268--279.

\bibitem{dong2013x}
{\sc B.~Dong, J.~Li, and Z.~Shen}, {\em X-ray ct image reconstruction via
  wavelet frame based regularization and radon domain inpainting}, J. Sci.
  Comput., 54 (2013), pp.~333--349.

\bibitem{esedog2015threshold}
{\sc S.~Esedo{\=g}lu and F.~Otto}, {\em Threshold dynamics for networks with
  arbitrary surface tensions}, Comm. Pure Appl. Math., 68 (2015), pp.~808--864.

\bibitem{esedog2006threshold}
{\sc S.~Esedo{\=g}lu and Y.-H.~R. Tsai}, {\em Threshold dynamics for the
  piecewise constant {M}umford-{S}hah functional}, J. Comput. Phys., 211
  (2006), pp.~367--384.

\bibitem{goldstein2009split}
{\sc T.~Goldstein and S.~Osher}, {\em The split bregman method for
  l1-regularized problems}, SIAM J. Imaging Sci., 2 (2009), pp.~323--343.

\bibitem{miranda2007short}
{\sc M.~Miranda, D.~Pallara, F.~Paronetto, and M.~Preunkert}, {\em Short-time
  heat flow and functions of bounded variation in r$^n$}, in Ann. Fac.
  Sci.Toulouse Math., vol.~16, Universit{\'e} Paul Sabatier, 2007, p.~125.

\bibitem{mitiche2010variational}
{\sc A.~Mitiche and I.~B. Ayed}, {\em Variational and level set methods in
  image segmentation}, vol.~5, Springer Science \& Business Media, 2010.

\bibitem{mumford1989optimal}
{\sc D.~Mumford and J.~Shah}, {\em Optimal approximations by piecewise smooth
  functions and associated variational problems}, Comm. Pure Appl. Math., 42
  (1989), pp.~577--685.

\bibitem{shen2011accelerated}
{\sc Z.~Shen, K.-C. Toh, and S.~Yun}, {\em An accelerated proximal gradient
  algorithm for frame-based image restoration via the balanced approach}, SIAM
  J. Imaging Sci., 4 (2011), pp.~573--596.

\bibitem{tsai2001curve}
{\sc A.~Tsai, A.~Yezzi, and A.~S. Willsky}, {\em Curve evolution implementation
  of the mumford-shah functional for image segmentation, denoising,
  interpolation, and magnification}, IEEE-IP, 10 (2001), pp.~1169--1186.

\bibitem{vese2002multiphase}
{\sc L.~A. Vese and T.~F. Chan}, {\em A multiphase level set framework for
  image segmentation using the mumford and shah model}, Int',I J.Computer
  vision, 50 (2002), pp.~271--293.

\bibitem{wei2015primal}
{\sc K.~Wei, X.-C. Tai, T.~F. Chan, and S.~Leung}, {\em Primal-dual method for
  continuous max-flow approaches}, in Computational {Vision} and {Medical}
  {Image} {Processing} {V}: {Proceedings} of the 5th {Eccomas} {Thematic}
  {Conference} on {Computational} {Vision} and {Medical} {Image} {Processing}
  ({VipIMAGE} 2015, {Tenerife}, {Spain}, {October} 19-21, 2015), CRC Press,
  2015, p.~17.
\newblock 00000.

\bibitem{xu_wang_wang_2016}
{\sc X.~Xu, D.~Wang, and X.-P. Wang}, {\em An efficient threshold dynamics
  method for wetting on rough surfaces}, arXiv:1602.04688, Feb. (2016).

\bibitem{yuan_study_2010}
{\sc J.~Yuan, E.~Bae, and X.-C. Tai}, {\em A study on continuous max-flow and
  min-cut approaches}, in Computer {Vision} and {Pattern} {Recognition}
  ({CVPR}), 2010 {IEEE} {Conference} on, IEEE, 2010, pp.~2217--2224.
\newblock 00148.

\end{thebibliography}
\end{document}